\documentclass[twoside,11pt]{article}

\usepackage{amsmath,amsthm}
\usepackage{tikz-cd}
\usepackage{subcaption}

\theoremstyle{definition}
\newtheorem{thm}{Theorem}
\newtheorem{exmp}[thm]{Example}

\usepackage[preprint]{jmlr2e}

\usepackage{lastpage}
\jmlrheading{}{2026}{}{}{}{}{Tim Mangliers, Bernhard Mössner and Benjamin Himpel}

\ShortHeadings{Spectral Convolution on Orbifolds}{Mangliers, Mössner and Himpel}
\firstpageno{1}

\begin{document}
\title{Spectral Convolution on Orbifolds for Geometric Deep Learning}

\author{\name Tim Mangliers \email Tim.Mangliers@Reutlingen-University.de \\
	\name Bernhard Mössner \email Bernhard.Moessner@Reutlingen-University.de \\
	\name Benjamin Himpel \email Benjamin.Himpel@Reutlingen-University.de \\
	\addr Department of Computer Science\\
	Reutlingen University, Germany
}

\editor{My editor}
\maketitle

\begin{abstract}%
Geometric deep learning (GDL) deals with supervised learning on data domains that go beyond Euclidean structure, such as data with graph or manifold structure. Due to the demand that arises from application-related data, there is a need to identify further topological and geometric structures with which these use cases can be made accessible to machine learning. There are various techniques, such as spectral convolution, that form the basic building blocks for some CNN-like architectures on non-Euclidean data. In this paper, the concept of spectral convolution on orbifolds is introduced. This provides a building block for making learning on orbifold-structured data accessible using GDL. The theory discussed is illustrated using an example from music theory.
\end{abstract}

\begin{keywords}
geometric deep learning, spectral convolution, symmetry-aware learning, quotient spaces, spectral methods
\end{keywords}

\section{Introduction}
Supervised learning can take place on Euclidean data. However, real-world data can also have a different structure, such as being structured as graphs or manifolds \citep{Zafeiriou2022,Chowdhury2023}. The question is how architectures that achieve good results on Euclidean data, such as convolutional neural networks (CNNs), can be adapted and generalized so that effective and efficient learning can also take place on more general data domains. The development of techniques for learning on non-Euclidean data domains is associated with the field of geometric deep learning (GDL). Several different basic techniques already exist that enable learning on graphs and manifolds using generalized CNNs \citep{Henaff2015,Masci2015}. 

Fundamental to the CNN architecture is the notion of convolution. Various types of convolution can be defined on manifolds, such as spatial and spectral convolution, each exhibiting distinct advantages and disadvantages \citep{Bruna2014,Monti2017,gerken2023geometric}. Orbifolds are a mild generalization of manifolds and offer the possibility of representing a framework with which certain symmetry-structured data can be classified. It is shown in \citep{Jain2012} that an approach using stochastic generalized gradient learning on orbifolds can be used to learn on particular combinatorial structures such as graphs. However, in this paper, it is shown that a spectral convolution as it can be defined on manifolds can be generalized to orbifolds. In this way, learning on orbifolds is conceptually integrated into GDL and can benefit from its descriptive power.

The structure of the paper is as follows. First, we introduce the setting and motivation of GDL. We then develop the main theoretical contribution of this work by establishing a spectral notion of convolution on orbifolds. Subsequently, we illustrate the proposed construction on a central example drawn from musicology. Finally, we conclude with a discussion and an outlook on further domains in which the orbifold-based approach may prove fruitful.

\section{Geometric Deep Learning}
The term GDL was introduced in the project Deep LEarning on MANifolds and graphs and the paper \citep{bronstein2017geometric} as an umbrella term for deep learning techniques that go beyond Euclidean data processing and learn more general geometrically structured data. The main focus here was initially on techniques that process data in the form of manifolds and graphs. Such data can occur in the form of 3D shapes, point clouds, meshes and graphs in applications like computer graphics, particle physics, drug design based on molecule data and algorithms in social networks \citep{cao2020,sommer2020,zhou2020,cao2022,powers2023}.

The goal of GDL was then defined mainly more broadly in \citep{bronstein2021geometric}. GDL since then has aimed to systematize the field of deep learning. This is because there are a large number of deep learning architectures that are fundamentally different, at least on the surface. However, architectures can be categorized using a geometric approach. The geometric view is such that architectures are classified based on the symmetries of the data on which they operate. Symmetries mean that data and structures are preserved under certain transformations and do not change, i.e. remain invariant. These transformations can be described with the help of group theory. For example, unstructured sets and graphs have permutation invariance and grids have translation invariance.

The architectures acting on the data then respect the structure belonging to the data by structurally incorporating these symmetries themselves. This view is presented in detail in \citep{bronstein2021geometric} with the so-called GDL blueprint. In the GDL blueprint, abstract layers, such as equivariant and invariant layers, are defined with respect to group operations. This blueprint can then be used to classify existing architectures and to design new architectures based on it. For CNNs acting on Euclidean data or grid-like data, this means that the convolutional layers are shift-equivariant, i.e. a shift in the input of a layer leads to a shift in the output and shift invariance for max-pooling layers means that a shift in the input leads to the same result in the output as if no shift had been performed.
One way to generalize learning on Euclidean data to learning on non-Euclidean data is to generalize convolution to design architectures that are structurally similar to CNNs. One way to extend the classical convolution to graphs and manifolds is by spectral convolution. Spectral convolution basically works in such a way that functions on certain structures (such as graphs or manifolds) are represented as a linear combination with the help of an orthogonal basis and in this representation two different functions can simply be combined and then transformed back. The basic idea here comes from the classical convolution. Scalar functions can be transformed into a specific linear combinatorial representation using the Fourier transform; it is also said that they are transformed from the spatial domain into the frequency domain. Here, the basis of the linear combination is exactly the eigenfunctions of the Laplacian. The convolution theorem now states that a convolution of two scalar functions after their Fourier transformation in the frequency domain represents a multiplication. The graph Laplacian can be defined on graphs and a generalization of the Laplacian, the so-called Laplace-Beltrami operator, can be defined on manifolds. The projection of scalar functions onto the eigenfunctions of these operators, their multiplication, and their reverse transformation, is then the definition of spectral convolution.
The following illustrates this concept on manifolds according to \citep{Vallet2008} and \citep[p. ~54]{bronstein2021geometric}. If $\Delta$ is the Laplacian on a compact Riemannian manifold, then its eigenfunctions $\varphi_{k}$ can be used to represent square-integrable functions $f$ as Fourier series $f(x) = \sum_{k = 0}^{\infty} \left< f , \varphi_{k} \right> \varphi_{k}(x)$ using the generalized Fourier transform with Fourier coefficients $\left< f , \varphi_{k} \right> = \hat{f}_{k}$. The index is discrete, as the manifold is assumed to be compact. The spectral convolution of two functions then looks like
\[
(f * g)(x) = \sum_{k = 0}^{\infty} (\hat{f}_{k} \cdot \hat{g}_{k}) {\varphi_{k}}(x).
\]
In the next section we show that this kind of convolution can be generalized to orbifolds.

\section{Spectral Convolution on Orbifolds}
We show how spectral convolution can be defined, providing a fundamental building block for CNNs and GDL on orbifolds.

There are different ways of describing orbifolds, but for our purposes of connecting orbifolds with GDL we choose the classical differential geometric point of view.

Manifolds are objects that are locally homeomorphic to open subsets of $\mathbb{R}^n$, whereas orbifolds are locally homeomorphic to quotients of open subsets $\mathbb{R}^n$ by the action of a finite group. Manifolds can carry additional structure, so there are also manifolds whose transition maps between charts are smooth. In this case, manifolds can be equipped with a Riemannian metric. A Riemannian metric is a smooth function that assigns to each point on the manifold a scalar product in the tangent space adjacent to this point. In the same way, orbifolds can have diffeomorphisms and can then also be equipped with Riemannian metrics. The Riemannian metric can then be used to define a generalization of the Laplacian, and this differential operator can, in turn, be used to define the aforementioned spectral convolution of functions in $L^{2}(X)$, whereby this is the Hilbert space of certain measurable, complex-valued functions on the orbifold and is constructed in a manner analogous to classical measure and integration theory on manifolds. \citep{Stanhope2011,Gordon2012} 

Orbifolds can therefore be introduced like manifolds via maps of $\mathbb{R}^{n}$ with certain properties or equivalently as global quotients of manifolds.  We describe the latter approach in the succeeding paragraphs. A more detailed discussion of orbifolds can be found for instance in \citep{Adem2007}.

Throughout we work with orbifolds presented as global quotients 
\[ X = M/G, \]
where $M$ is a compact smooth Riemannian manifold and $G$ is a compact Lie group acting smoothly, effectively and almost freely on $M$ (that is, all isotropy groups are finite).
This definition is equivalent to the usual definition of an effective smooth orbifold based on local charts, mentioned at the beginning of this section, see \citep[pp.~12--13]{Adem2007} and \citep[p.~50]{Stanhope2011}.

\begin{exmp}
\label{example:2}
The discrete, non-compact Lie group $\mathbb{Z}^{2}$ acts on $\mathbb{R}^{2}$ by integer translations $\tau (x,y) := (x + m, y + n)$. This action is smooth, proper and free; therefore, the resulting quotient is a  compact smooth manifold, commonly known as the (topological) torus $\mathbb{T}_{1}^{2}$.
Reflecting points across the diagonal in $\mathbb{R}^2$ factors through $\mathbb{T}_1^2$ as a smooth action of the finite Lie group $S_2$, which is proper and almost free. The resulting quotient $\mathbb{T}_1^2 / S_2 =: \mathcal{C}^{2}_{1}$ is therefore an effective smooth orbifold.
\end{exmp}

A $G$-invariant metric $g_M$ on $M$ can be constructed by averaging. 
This metric then descends to a well-defined Riemannian metric $g_X$ on the quotient $X = M / G$, endowing $X$ with the structure of a Riemannian orbifold.
It is then possible to define smooth functions on $X$, and the pullback $\pi^*$ associated with the projection $\pi : M \to X$, defined by $(\pi^* f)(x) = f(\pi(x))$, 
induces an isomorphism between smooth functions on $X$ and smooth $G$-invariant functions on $M$ \citep[p.~50]{Stanhope2011}.

\begin{exmp}
	Consider the unit flat torus $\mathbb{T}^2_1$ with the Riemannian metric induced from the standard Euclidean metric on $\mathbb{R}^2$. 
	Let $S_2$ act as in Example~\ref{example:2}, namely by swapping the coordinates. 
	Since the Euclidean metric is invariant under isometries, in particular under this reflection, the induced metric on $\mathbb{T}^2_1$ is $S_2$-invariant. 
	Hence, the quotient $\mathcal{C}^2_1$ inherits a well-defined Riemannian metric and is a compact Riemannian orbifold.
\end{exmp}

Consider the as usual defined Hilbert space
\[
L^{2}(X) = \{ f : X \to \mathbb{C} \ \text{measurable} \mid \int_X |f|^{2}\, dv_{X} < \infty \},
\]
where $dv_{X}$ denotes the orbifold volume measure induced by the Riemannian metric. 
The Laplacian $\Delta_{M} = -\operatorname{div}(\nabla_{M})$ on $M$ induces the Laplacian
on $X$ via
$
\Delta_X(f) := \Delta_M(\pi^* f).
$ Since the Laplacian $\Delta_M$ commutes with isometries \citep[pp.~99--100]{Puta2001}, it preserves $G$-invariance; hence, for any $f \in C^\infty(X)$, the function $\Delta_M(\pi^* f)$ is $G$-invariant, and the induced operator $\Delta_X$ on $X$ is well defined. Beyond that is the Laplace spectrum of $X$ contained in that of $M$,
$
\mathrm{Spec}(\Delta_{X}) \subseteq \mathrm{Spec}(\Delta_{M}).
$
See \citep[p.~50]{Stanhope2011} and \citep[p.~216]{Farsi2001} for details. Given a Hilbert space $H$ with an orthonormal basis $\{e_i\}_{i \in I} \subseteq H$, vectors $v\in H$ can be represented by the expansion 
$
v = \sum_{i \in I} \hat{v}(i)\, e_i,
$
where the coefficients are given by $\hat{v}(i) := \langle v, e_i \rangle$. 
In particular, this holds for the Hilbert space of square-summable sequences $\ell^2(\mathbb{N})$, which will be used in the following.

\begin{thm}
	\label{theo:theo_1}
	Let $X$ be an orbifold. 
	Then there exists a unitary operator
	\[
	\mathcal{F}: L^2(X) \to \ell^2(\mathbb{N}),
	\]
	called the Fourier transform. 
	This operator allows us to define a convolution on $L^2(X)$ by
	\begin{equation}
	f * g := \mathcal{F}^{-1}\!\big( \mathcal{F}(f) \odot \mathcal{F}(g) \big),\label{eq:conv_on_orbi}
	\end{equation}
	where $\odot$ denotes the pointwise product.
\end{thm}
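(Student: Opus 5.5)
The plan is to build $\mathcal{F}$ from an orthonormal eigenbasis of the orbifold Laplacian $\Delta_X$, mirroring the manifold construction recalled in the previous section. I will then check that the convolution \eqref{eq:conv_on_orbi} is well defined on $L^2(X)$.

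\textbf{Step 1: eigenbasis.} Write $X = M/G$ as in the standing assumptions, with $M$ compact and $g_M$ a $G$-invariant metric. I would invoke the spectral theory of compact Riemannian orbifolds \citep{Stanhope2011,Gordon2012,Farsi2001}: $\Delta_X$ extends to a nonnegative self-adjoint operator on $L^2(X)$ with compact resolvent. Hence $L^2(X)$ has an orthonormal basis $\{\varphi_k\}_{k\in\mathbb{N}}$ of smooth eigenfunctions, with eigenvalues $0=\lambda_0\le\lambda_1\le\dots\to\infty$, each of finite multiplicity.

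\textbf{Step 2: justifying Step 1 via $M$.} This is the step I expect to be the main obstacle, because the paper defines $\Delta_X$ only through the pullback. I would argue through $M$ as follows.
\begin{itemize}
\item Normalise the volume so that $\int_X |f|^2\,dv_X$ is proportional to $\int_M |\pi^*f|^2\,dv_M$, where $dv_X$ is the pushforward of $dv_M$, suitably normalised by the orbit volume. Then $\pi^*$ is an isometry, up to a constant, from $L^2(X)$ onto the closed subspace $L^2(M)^G$ of $G$-invariant functions.
\item Since $G$ acts by isometries, $\Delta_M$ commutes with the unitary $G$-action. It therefore preserves each finite-dimensional eigenspace $E_\lambda\subset C^\infty(M)$.
\item Averaging over $G$ with Haar measure gives the orthogonal projection $P$ onto $L^2(M)^G$. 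This $P$ commutes with $\Delta_M$ and maps each $E_\lambda$ to $E_\lambda^G = E_\lambda\cap L^2(M)^G$.
\item The classical decomposition $L^2(M)=\widehat{\bigoplus}_\lambda E_\lambda$ then gives $L^2(M)^G=\widehat{\bigoplus}_\lambda E_\lambda^G$. Choosing an orthonormal basis of each finite-dimensional $E_\lambda^G$ and pulling back through $(\pi^*)^{-1}$ produces the orthonormal eigenbasis $\{\varphi_k\}$ of $L^2(X)$. This also recovers $\mathrm{Spec}(\Delta_X)\subseteq\mathrm{Spec}(\Delta_M)$.
\item If $\bigoplus_\lambda E_\lambda^G$ is finite-dimensional, which happens only in degenerate cases such as $\dim X=0$, index by a finite set instead. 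I would remark on this, or assume $\dim X>0$ so that the index set is countably infinite and can be enumerated by $\mathbb{N}$.
\end{itemize}

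\textbf{Step 3: the Fourier transform.} Define $\mathcal{F}(f)=(\langle f,\varphi_k\rangle)_{k\in\mathbb{N}}$. Bessel's inequality together with completeness (Parseval) shows that $\mathcal{F}$ is an isometry into $\ell^2(\mathbb{N})$. It is surjective because for any $(c_k)\in\ell^2$ the series $\sum c_k\varphi_k$ converges in $L^2(X)$ and has coefficients $c_k$. So $\mathcal{F}$ is unitary, with $\mathcal{F}^{-1}(c)=\sum_k c_k\varphi_k$, which is the Fourier series from the manifold case.

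\textbf{Step 4: the convolution.} For $a,b\in\ell^2(\mathbb{N})$ we have $\sup_k|a_k|\le\|a\|_2$. Hence $\sum_k|a_kb_k|^2\le\|a\|_2^2\|b\|_2^2$, so $a\odot b\in\ell^2(\mathbb{N})$ (indeed even in $\ell^1$, by Cauchy–Schwarz). Therefore $f*g=\mathcal{F}^{-1}(\mathcal{F}f\odot\mathcal{F}g)$ is a well-defined element of $L^2(X)$, with $\|f*g\|\le\|f\|\,\|g\|$. It is bilinear and commutative, and it agrees with $\sum_k\hat f_k\hat g_k\varphi_k$. Finally, I would note that $\mathcal{F}$, and hence $*$, depends on the choice of basis within eigenspaces of dimension greater than one, exactly as in the manifold setting.
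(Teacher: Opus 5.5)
Your Steps 1, 3 and 4 follow the paper's proof. The paper also takes an orthonormal eigenbasis of $\Delta_X$ from the spectral theory of compact Riemannian orbifolds (citing Chiang), and gets unitarity from orthonormality plus completeness. It then checks that the pointwise product of two $\ell^2$ sequences lies in $\ell^1(\mathbb{N})\subset\ell^2(\mathbb{N})$. Your bound $\sup_k|a_k|\le\|a\|_2$ does the same job and also gives $\|f*g\|\le\|f\|\,\|g\|$.

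The genuine difference is your Step 2, which the paper does not have: you derive the eigenbasis from the spectral theorem on $M$ by Haar averaging instead of citing the orbifold result. This works directly with the paper's pullback definition $\Delta_X f:=\Delta_M(\pi^*f)$ and yields $\mathrm{Spec}(\Delta_X)\subseteq\mathrm{Spec}(\Delta_M)$ for free. However, the measure needs care:
\begin{itemize}
\item For finite $G$, the case of the dyad example, $\int_M|\pi^*f|^2\,dv_M=|G|\int_X|f|^2\,dv_X$ because the regular set has full measure, and everything is as you say.
\item For positive-dimensional $G$, the orbit volume $v(x)$ is generally not constant. For example, for $S^1$ acting on $S^3\subset\mathbb{C}^2$ with coprime weights $p\neq q$, the orbit length is $2\pi\sqrt{p^2|z_1|^2+q^2|z_2|^2}$. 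On the regular part one computes $\Delta_M(\pi^*f)=\pi^*\bigl(-v^{-1}\operatorname{div}(v\nabla f)\bigr)$.
\item Hence the pullback-defined $\Delta_X$ is symmetric for $\pi_*dv_M=v\,dv_X$, not for the Riemannian measure $dv_X$. Likewise, $\pi^*$ is unitary (up to a constant) only from $L^2(X,v\,dv_X)$.
\end{itemize}
So you must take $dv_X:=\pi_*dv_M$ literally, rescaled by a constant and not pointwise by the orbit volume. Since $v$ is bounded above and below, this changes the inner product but not the underlying space.

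Your remark that $\dim X>0$ is needed for any unitary onto $\ell^2(\mathbb{N})$ to exist is correct and is a point the paper passes over. So is your remark that $\mathcal{F}$ and $*$ depend on the basis chosen inside multi-dimensional eigenspaces. The paper's proof also writes $\lambda_0<\lambda_1$, which presumes $X$ is connected; your $\le$ avoids this.
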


\begin{proof}
	The spectrum of the Laplacian on a compact Riemannian orbifold consists of an infinite sequence of eigenvalues
	\[
	\mathrm{Spec}(\Delta_{X}) = \{ \lambda_k \}_{k \in \mathbb{N}} = \{ 0 = \lambda_0 < \lambda_1 \leq \lambda_2 \leq \cdots \},
	\]
	with $\lim_{k \to \infty} \lambda_k = \infty$. 
	Correspondingly, there exists an orthonormal basis of $L^2(X)$ consisting of smooth eigenfunctions 
	$\{ \psi_k \}_{k \in \mathbb{N}}$ of the Laplacian \citep[pp.~93--99]{Chiang1993}; see also \citep{Gordon2019}, satisfying
	\begin{equation}
		\label{eq:laplacian_ef}
		\Delta_{X} \psi_k =  \lambda_k \psi_k, 
		\qquad 
		\langle \psi_i, \psi_j \rangle = \delta_{ij}.
	\end{equation}
	The Fourier transform of a function $f \in L^2(X)$ is then given by
	\[
		\label{eq:fourier_on_orbi}	
		\mathcal{F}_{\Delta_{X}}: L^2(X) \to \ell^{2}(\mathbb{N}), 
		\qquad 
		f \mapsto \mathcal{F}_{\Delta_{X}}(f) = (\hat{f}(k))_{k \in \mathbb{N}},
		\quad 
		\hat{f}(k) := \langle f, \psi_k \rangle.
	\]
	This operator is unitary by standard Hilbert space arguments: 
	it is an isometry since it maps one orthonormal basis onto another, 
	and it is surjective due to the completeness of the eigenbasis in $L^2(X)$. 
	The spectral convolution of $f,g \in L^2(X)$ is defined by
\[	(f * g)(u) := \mathcal{F}_{\Delta_{X}}^{-1}\!\big( \mathcal{F}_{\Delta_{X}}(f) \odot \mathcal{F}_{\Delta_{X}}(g) \big)
		= \sum_{k=0}^{\infty} \hat{f}(k) \, \hat{g}(k) \, \psi_k(u).
\] Since for two sequences $a,b \in \ell^{2}(\mathbb{N})$ the pointwise product lies in $\ell^1(\mathbb{N})$ by the Cauchy-Schwarz inequality, and since every absolutely summable sequence is quadratically summable by an elementary estimate, hence $\ell^1(\mathbb{N})\subset \ell^{2}(\mathbb{N})$, the spectral product is well defined.
\end{proof}
In the following section, we illustrate the construction on an example from music theory.

\section{Example: Smoothing a Periodicity Function on the Orbifold of Dyads}
\label{section:example}
In this section we show how the logarithmic periodicity function on the chord orbifold $\mathcal{C}^2_{12}$ is smoothed by convolution with a smoothing filter.

\subsection{Chord Space}

The fundamental frequency \( f \) of a musical note is perceived on a
logarithmic scale and is represented as the pitch \( p \).
Following the common convention in music theory, we assign
\( \text{B3} = -1 \), \( \text{C4} = 0 \), and so on, with the reference
frequency \( f_0 = 261.626\,\text{Hz} \) corresponding to C4.
This yields the pitch mapping
\begin{equation}\label{eq:pitch}
	f \;\longmapsto\; p = 12 \cdot \log_2\!\left(\frac{f}{f_0}\right),
\end{equation}
where pitches are measured in semitones relative to~C4.  
Thus, $p$ is an absolute pitch value: integer values correspond to the
twelve semitone steps of the chromatic scale, while non-integer values indicate
microtonal pitches.

Cents, in contrast, do not represent absolute pitch values.
A cent is a logarithmic unit used exclusively for measuring
interval sizes, i.e.\ differences between pitches.
By definition, $100$\,cents equal one semitone, and therefore $1200$\,cents
equal one octave.
The size of an interval between two frequencies \( f_1 \) and \( f_2 \) in cents
is given by
\[
d = 1200 \cdot \log_2\!\left(\frac{f_2}{f_1}\right).
\]
If \(p_1\) and \(p_2\) denote the corresponding semitone pitches, then
\( d = 100 \,(p_2 - p_1) \).

A chord consisting of exactly \( n \) notes can be represented as a point in
\( \mathbb{R}^n \), where each coordinate corresponds to the pitch of a note.
Since the simultaneous notes of a chord are unordered, permutations of
coordinates do not change the chord, motivating the identification of chord
space with the quotient \( \mathbb{R}^n / S_n \), where the symmetric group
\( S_n \) acts by permuting coordinates.

Moreover, chord inversions are typically regarded as equivalent in music
theory. Incorporating octave equivalence yields the torus
\[
\mathbb{T}_\gamma^n = (\mathbb{R}/\gamma\mathbb{Z})^n
\cong \mathbb{R}^n / (\gamma\mathbb{Z})^n,
\]
and thus the chord space modulo ordering and octave transpositions is
\begin{equation}\label{eq:chords}
	\mathcal{C}^n_\gamma := \mathbb{T}^n_\gamma / S_n
	\;\cong\; (\mathbb{R}^n / S_n) / (\gamma\mathbb{Z})^n .
\end{equation}
In a twelve-tone system, $\gamma = 12$ corresponds to octave equivalence in
semitones.
Generally, via rescaling, it is $\mathcal{C}^n_\gamma \cong \mathcal{C}^n_\delta$, for $\gamma, \delta \in \mathbb{N}$.

\begin{exmp} \label{example:1} The space of dyads (two-note chords) is given by
	\[
		\mathcal{C}^{2}_{12} = \mathbb{T}_{12}^2 / S_2 \cong \mathbb{T}_1^2 / S_2 = \mathcal{C}^{2}_{1}.
	\]
	This space consists of all unordered pairs of pitch classes (modulo octaves).
	The fundamental domain of $\mathcal{C}_\gamma^2$ as a quotient of $\mathbb{R}^2$ as defined in Equation \eqref{eq:chords}  is given by the triangle
	\[
	T_\gamma := \left\{ (x,y) \in \mathbb{R}^2\;\middle|\; 0\leq y \leq x < \gamma\right\} .
	\] 
	However, $\mathcal{C}_1^2$ can be interpreted as a Möbius strip as follows. The Möbius strip is the non-orientable manifold with boundary viewed as an embedding in $\mathbb{R}^3$ via the parameterization
	\[
	\varrho(\alpha, r) := \begin{pmatrix}
		\cos{(2\alpha)} \cdot \left(1 + r \cos{(\alpha)}\right) \\
		\sin{(2\alpha)} \cdot \left(1 + r \cos{(\alpha)}\right) \\
		r \sin{(\alpha)}
	\end{pmatrix}, \quad \text{with } 	(\alpha,r) \in  [0, \pi) \times [-\tfrac{1}{2},\tfrac{1}{2}].
	\]
	Topologically, this corresponds to gluing $[0, \pi] \times [-\tfrac{1}{2},\tfrac{1}{2}]$ via $(0,x)\sim(\pi,-x)$.	Then the map from the fundamental domain \( T_1 \) of $\mathcal{C}_1^2$ onto the smoothly embedded Möbius strip is given by the composite map
	\[
	\varrho \circ \varphi: T_1  \to \mathbb{R}^{3}
	\] with
	\[
	\varphi(x,y) := \begin{cases}
		(\pi(x+y),x-y-\frac{1}{2}), & \text{if } x + y < 1 , \\
		( \pi(x+y-1),y-x+\frac{1}{2}), & \text{if } x + y \geq 1
	\end{cases}
	\]
	shown in Figure \ref{fig:Mapping}, which factors through $\mathcal{C}_1^2$ as a homeomorphism onto the Möbius strip. 
	\begin{figure}[tbp]
		\centering
		\def\svgwidth{\textwidth}
\begingroup%
  \makeatletter%
  \providecommand\color[2][]{%
    \errmessage{(Inkscape) Color is used for the text in Inkscape, but the package 'color.sty' is not loaded}%
    \renewcommand\color[2][]{}%
  }%
  \providecommand\transparent[1]{%
    \errmessage{(Inkscape) Transparency is used (non-zero) for the text in Inkscape, but the package 'transparent.sty' is not loaded}%
    \renewcommand\transparent[1]{}%
  }%
  \providecommand\rotatebox[2]{#2}%
  \newcommand*\fsize{\dimexpr\f@size pt\relax}%
  \newcommand*\lineheight[1]{\fontsize{\fsize}{#1\fsize}\selectfont}%
  \ifx\svgwidth\undefined%
    \setlength{\unitlength}{2151.35998535bp}%
    \ifx\svgscale\undefined%
      \relax%
    \else%
      \setlength{\unitlength}{\unitlength * \real{\svgscale}}%
    \fi%
  \else%
    \setlength{\unitlength}{\svgwidth}%
  \fi%
  \global\let\svgwidth\undefined%
  \global\let\svgscale\undefined%
  \makeatother%
  \begin{picture}(1,0.26271753)%
    \lineheight{1}%
    \setlength\tabcolsep{0pt}%
    \put(0,0){\includegraphics[width=\unitlength,page=1]{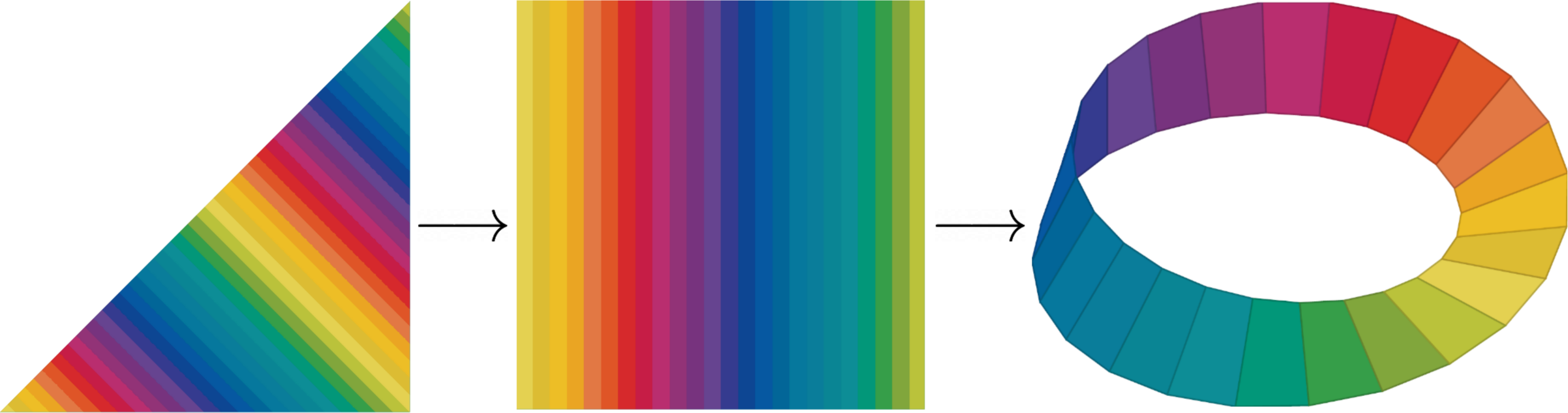}}%
    \put(0.28640992,0.12648256){\color[rgb]{0,0,0}\makebox(0,0)[lt]{\lineheight{1.25}\smash{\begin{tabular}[t]{l}$\varphi$\end{tabular}}}}%
    \put(0.61777558,0.12648256){\color[rgb]{0,0,0}\makebox(0,0)[lt]{\lineheight{1.25}\smash{\begin{tabular}[t]{l}$\varrho$\end{tabular}}}}%
  \end{picture}%
\endgroup%

		\caption{The mapping $\varrho\circ\varphi$ between the fundamental domain $T_1$ of $\mathcal{C}^2_1$ and the Möbius strip as it is classically shown embedded in $\mathbb{R}^{3}$.
		}
		\label{fig:Mapping}
	\end{figure}
\end{exmp}
Even though $\mathbb{R}^n$  and the torus $\mathbb{T}_{\gamma}^n$ are smooth manifolds, neither one of the quotients $\mathbb{R}^n/S_n$ and $\mathcal{C}^n_\gamma$ are manifolds. In fact, they are orbifolds \citep{Tymoczko2011,Himpel2022}. In particular, $\mathcal{C}^{2}_{12}$ is, according to Example \ref{example:2}, an effective smooth orbifold whose underlying topological space is homeomorphic to the Möbius strip, as explained in Example~\ref{example:1}.

\subsection{Laplacian Eigenfunctions on $\mathcal{C}^{2}_{12}$}
\label{section:laplacian_eigenfunctions_on_the_moebius_strip}

Any real function on $\mathbb{T}^{2}_{12}$ may be written as a function  
$f \colon \mathbb{R}^{2} \to \mathbb{R}$ satisfying the periodicity condition  
$f(x) = f(x + 12 \sum_{i=1}^{2} k_i e_i)$ for $k_{1}, k_{2} \in \mathbb{Z}$.  
Eigenfunctions of the Laplacian therefore satisfy
\[
\Delta_{\mathbb{T}^{2}_{12}} \psi_k
=
\frac{\partial^{2}\psi_k}{\partial x^{2}}
+
\frac{\partial^{2}\psi_k}{\partial y^{2}}
=
\lambda_k \psi_k,
\]
compare Equation~\eqref{eq:laplacian_ef}.  
By \citep{Zelditch2017}, an orthonormal Fourier basis of eigenfunctions is given by
\[
\psi_{c_{1},c_{2}}(x,y)
=
\exp\!\left(
\frac{2\pi i}{12}(c_{1} x + c_{2} y)
\right),
\qquad
c_{1}, c_{2} \in \mathbb{Z},
\]
with eigenvalues
\[
\lambda_{c_{1},c_{2}}
=
\left( \frac{2\pi}{12} \right)^{2} (c_{1}^{2} + c_{2}^{2}).
\]

Since $\mathcal{C}^{2}_{12}$ is obtained as the orbifold quotient of $\mathbb{T}^{2}_{12}$ by the $S_{2}$ action, only eigenfunctions invariant under the coordinate swap  
$(x,y) \mapsto (y,x)$ descend to the quotient.  
The symmetrising operator
\[
(R f)(x,y)
:=
\frac{1}{2}\big( f(x,y) + f(y,x) \big)
\]
is an orthogonal projection onto the subspace of symmetric functions.  
The symmetrised eigenfunctions $R(\psi_{c_{1},c_{2}})$ therefore form a complete orthonormal basis of $L^{2}(\mathcal{C}^{2}_{12})$ that encodes spatial variations of functions over the dyad orbifold and should not be mistakenly interpreted as acoustic waveforms that can be heard. Selected examples of eigenfunctions can be seen in Figure \ref{fig:ex_eigenfunctions}.

\begin{figure}[tbp]
	\centering
	\resizebox{0.5\textwidth}{!}{
	\begin{subfigure}{0.32\textwidth}
		\centering
		\includegraphics[width=\linewidth]{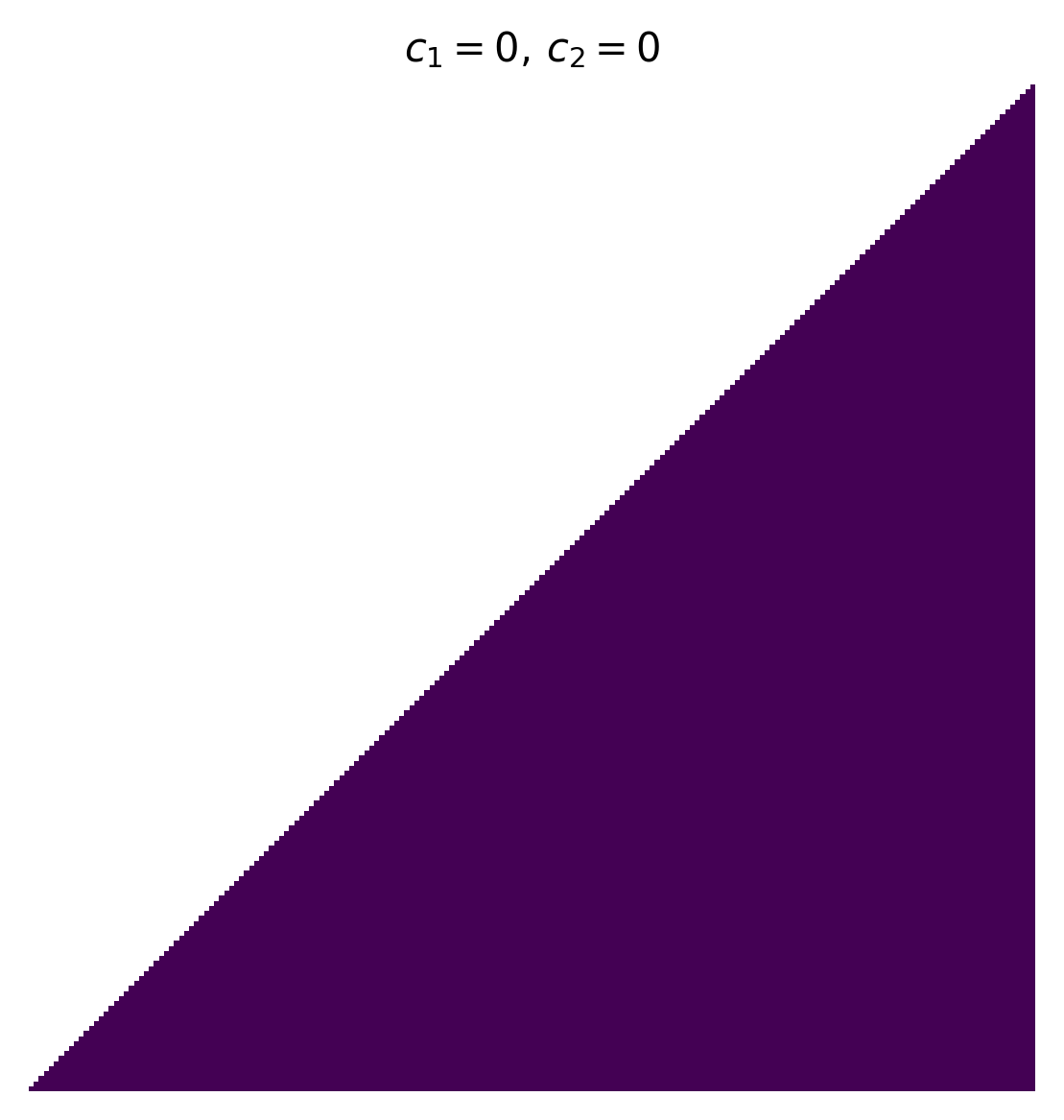}
		\label{fig:ef_1}
	\end{subfigure}
	\hfill
	\begin{subfigure}{0.32\textwidth}
		\centering
		\includegraphics[width=\linewidth]{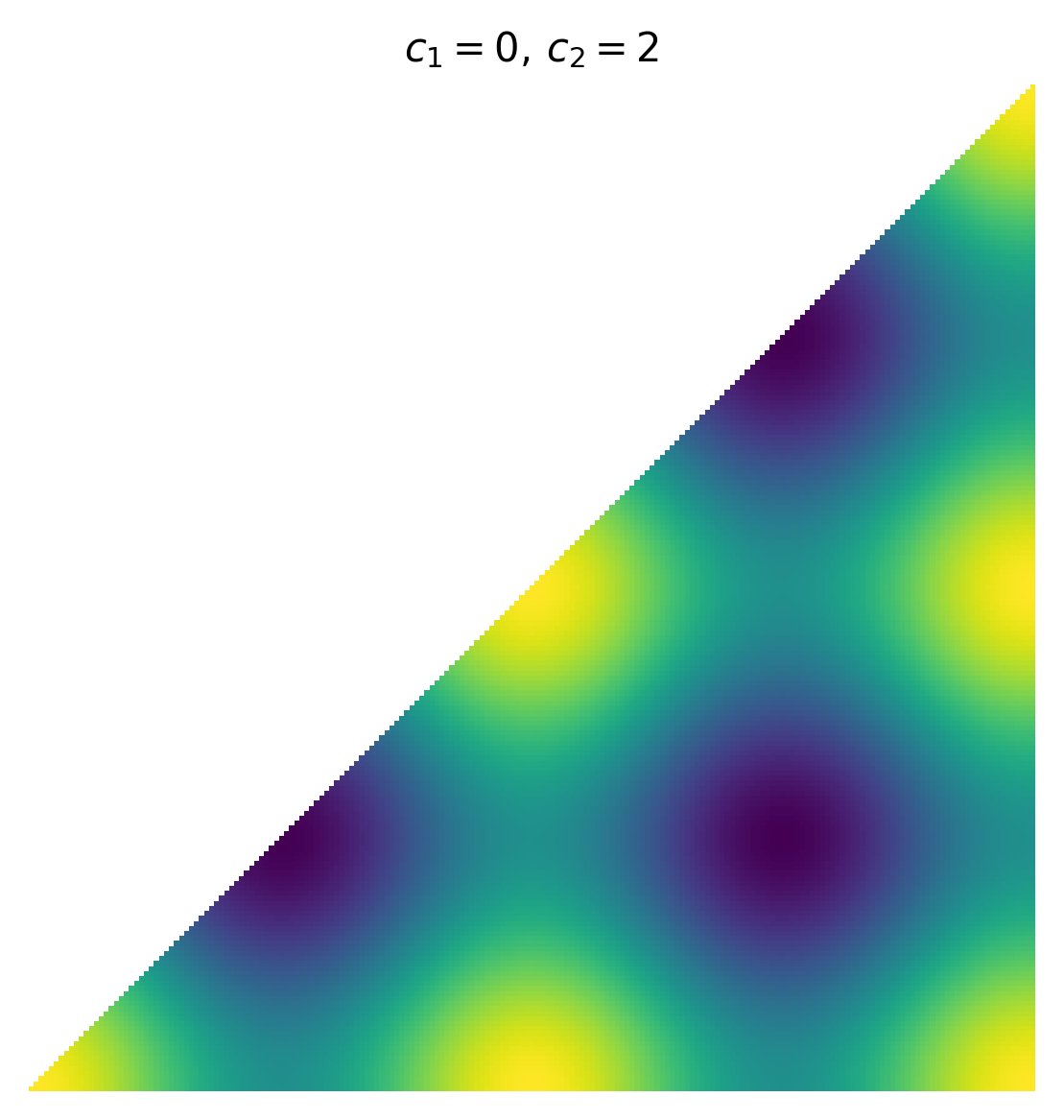}
		\label{fig:ef_2}
	\end{subfigure}
	\hfill
	\begin{subfigure}{0.32\textwidth}
		\centering
		\includegraphics[width=\linewidth]{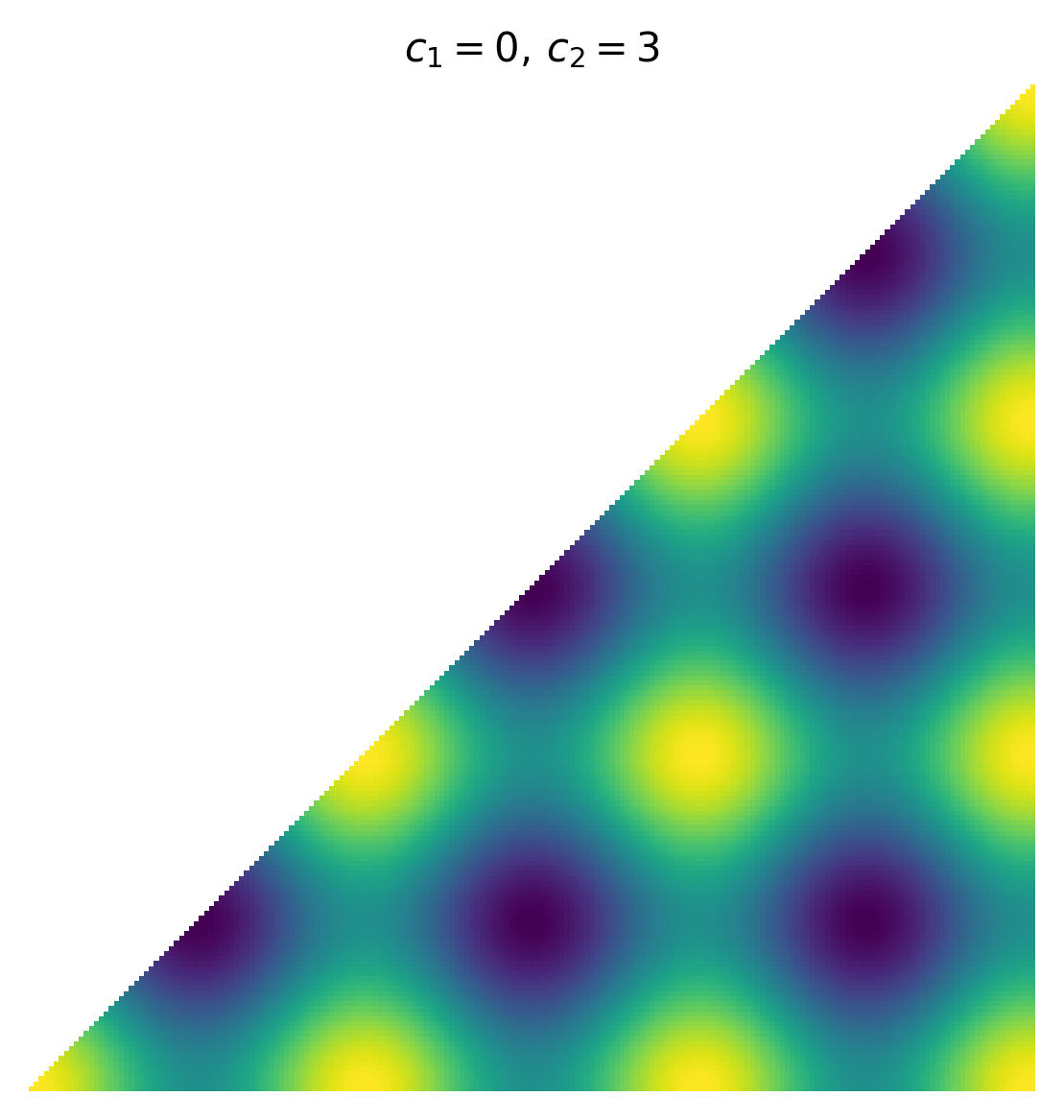}
		\label{fig:ef_3}
	\end{subfigure}
	}
	\vspace{0.5em}
	\resizebox{0.5\textwidth}{!}{
	\begin{subfigure}{0.32\textwidth}
		\centering
		\includegraphics[width=\linewidth]{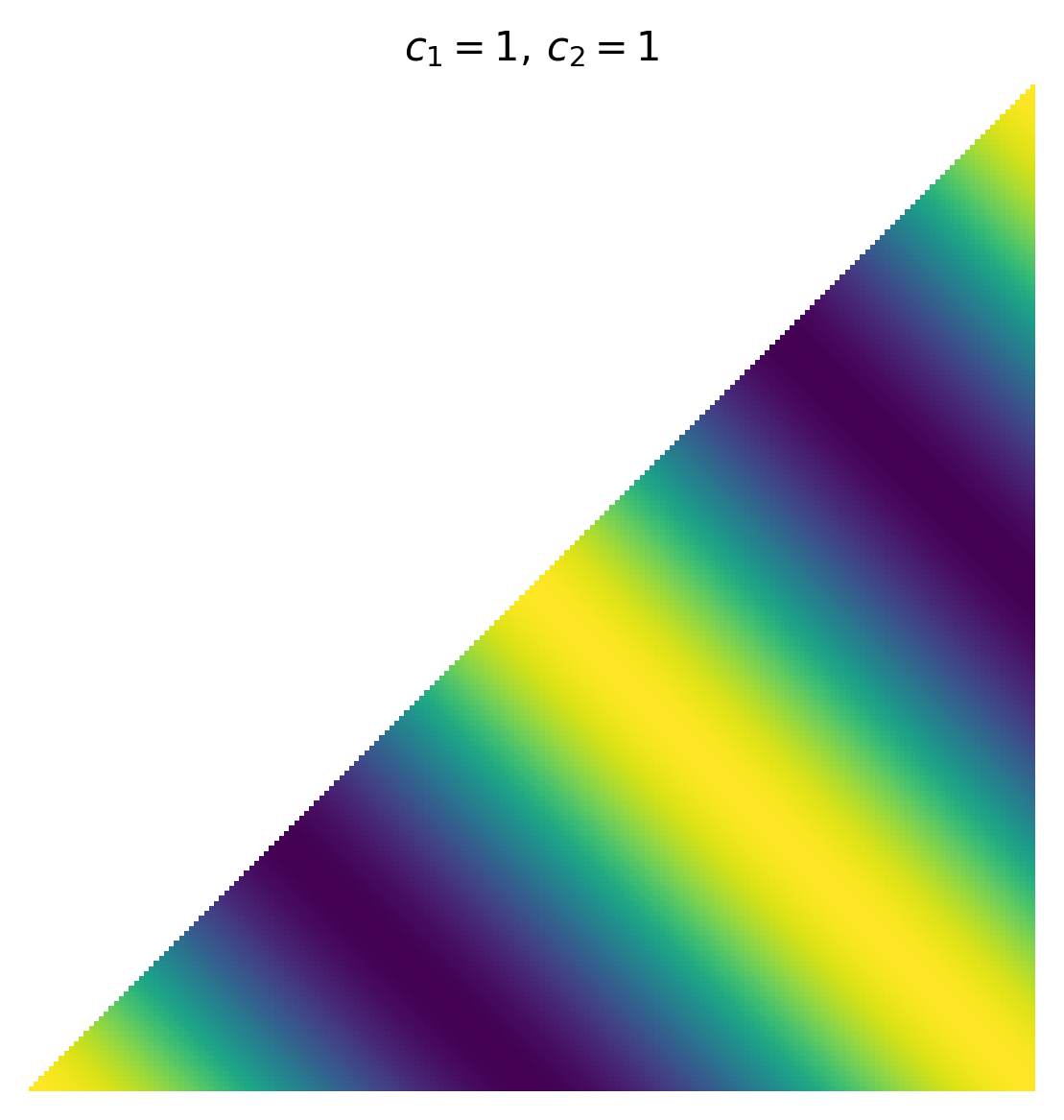}
		\label{fig:ef_4}
	\end{subfigure}
	\hfill
	\begin{subfigure}{0.32\textwidth}
		\centering
		\includegraphics[width=\linewidth]{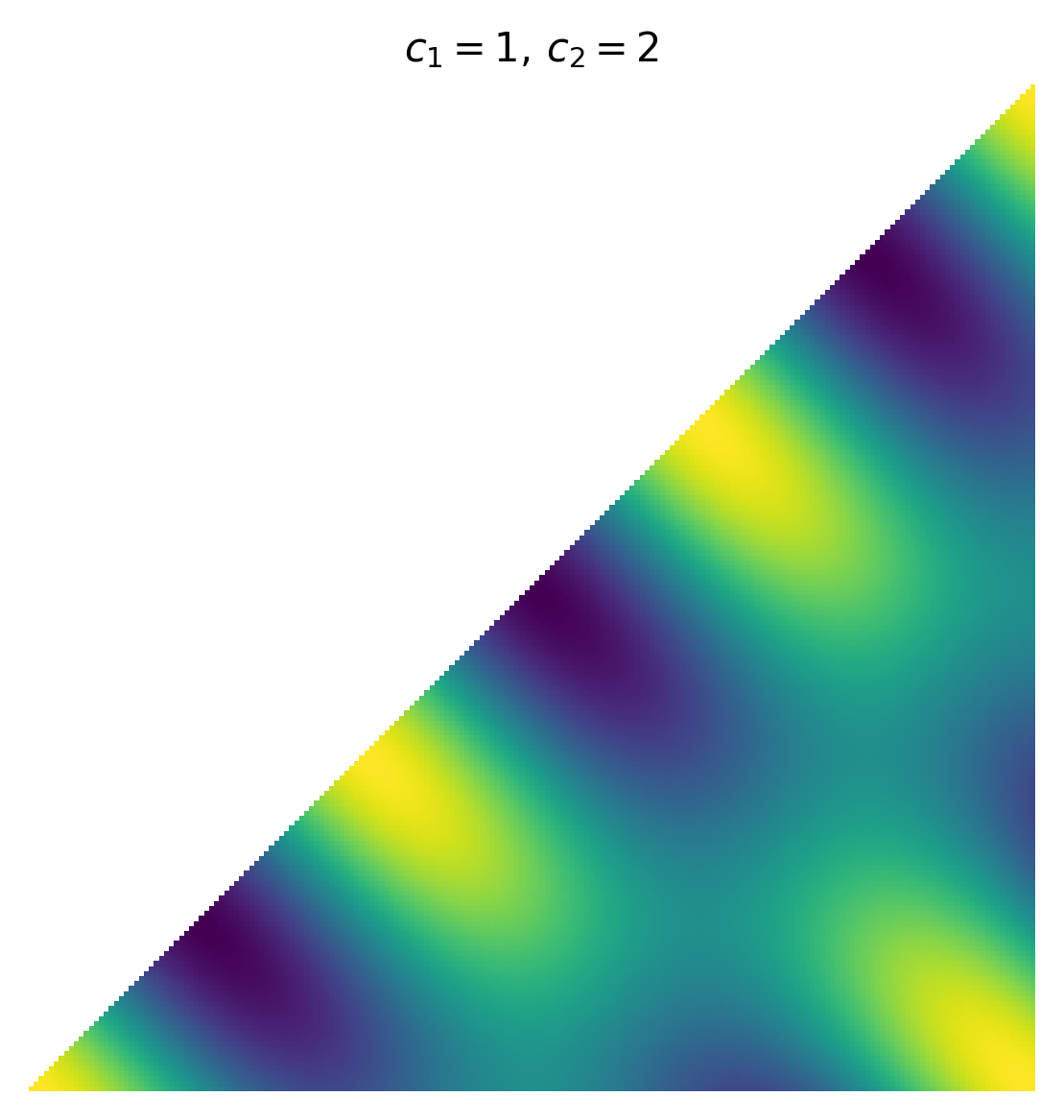}
		\label{fig:ef_5}
	\end{subfigure}
	\hfill
	\begin{subfigure}{0.32\textwidth}
		\centering
		\includegraphics[width=\linewidth]{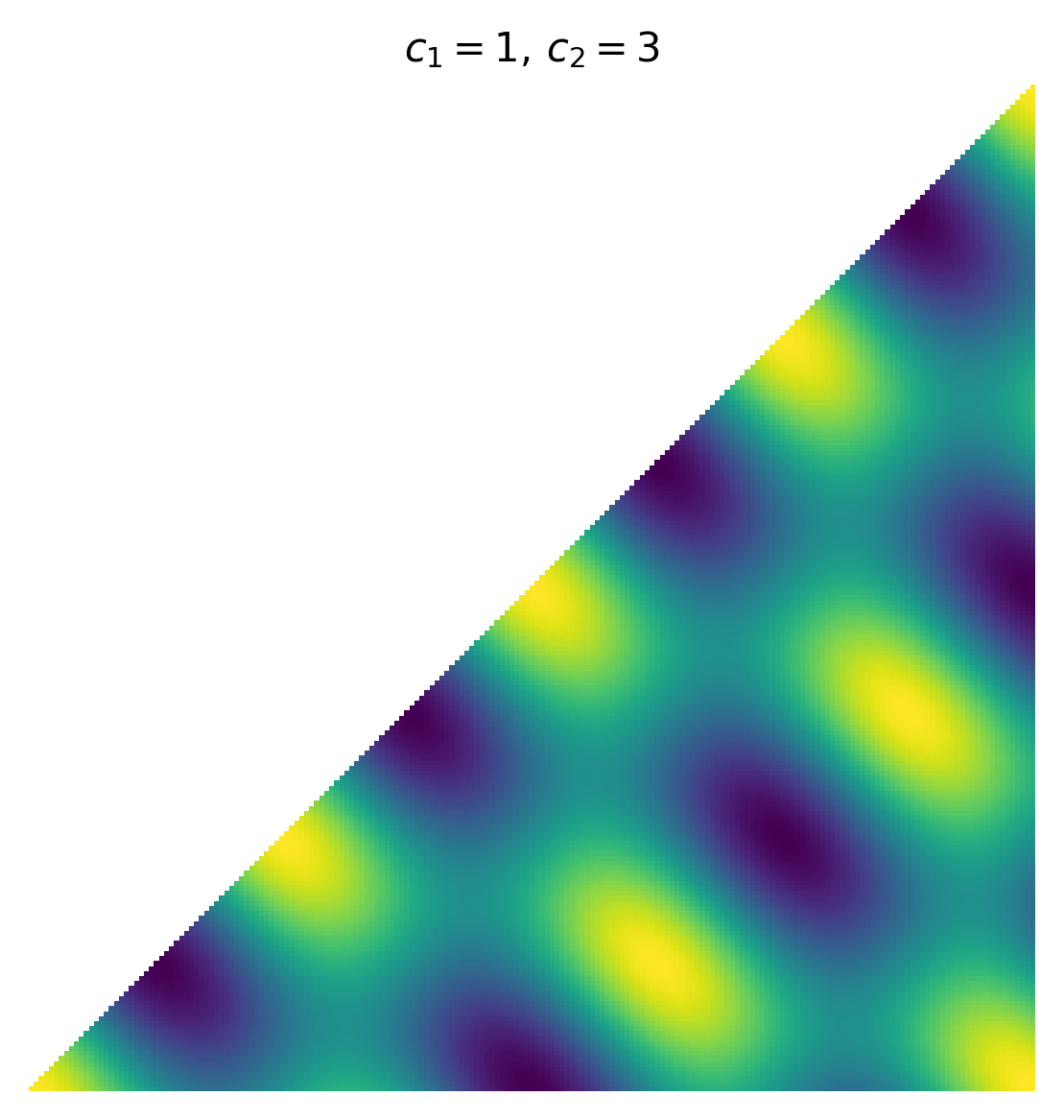}
		\label{fig:ef_6}
	\end{subfigure}
	}
	\vspace{0.5em}
	\resizebox{0.5\textwidth}{!}{
	\begin{subfigure}{0.32\textwidth}
		\centering
		\includegraphics[width=\linewidth]{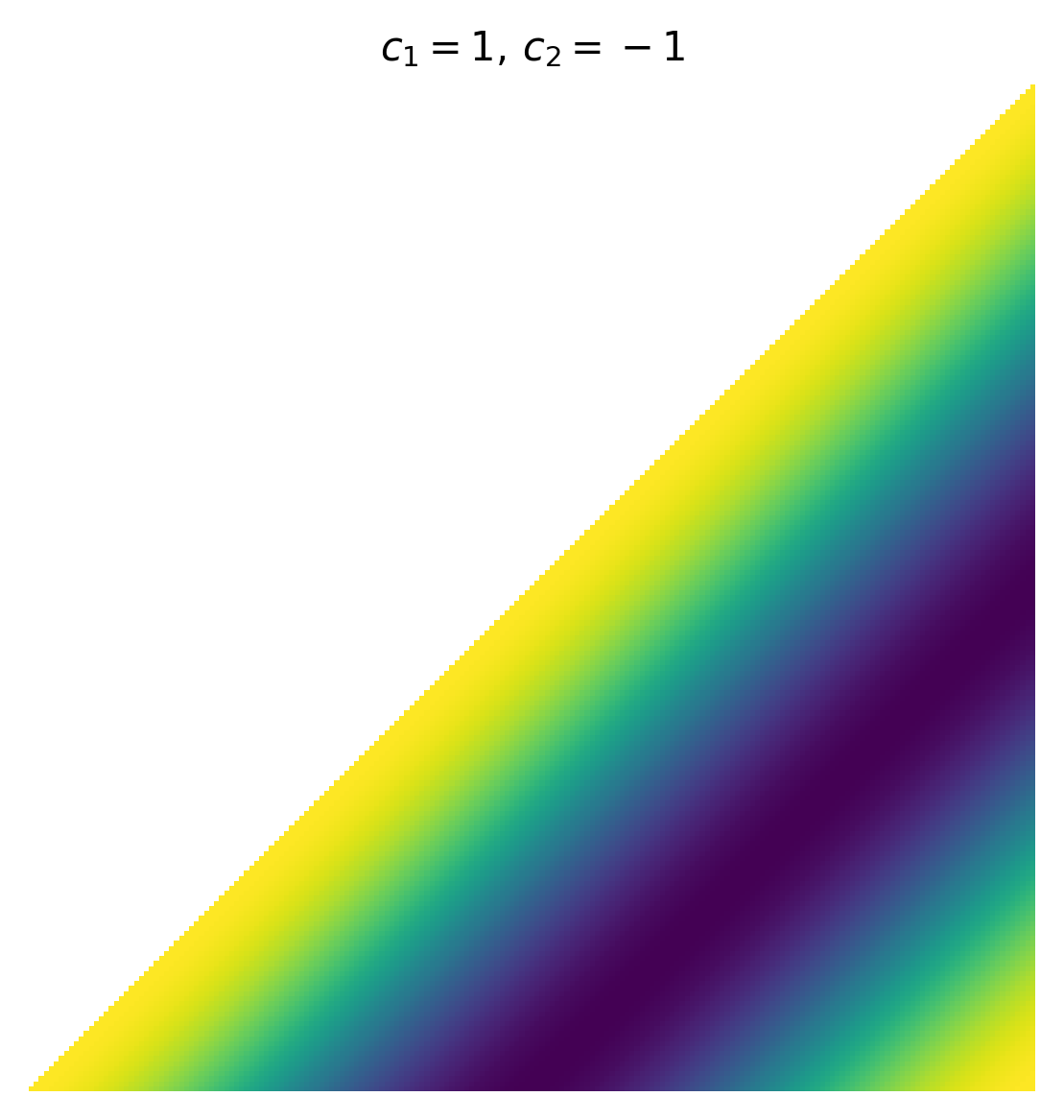}
		\label{fig:ef_7}
	\end{subfigure}
	\hfill
	\begin{subfigure}{0.32\textwidth}
		\centering
		\includegraphics[width=\linewidth]{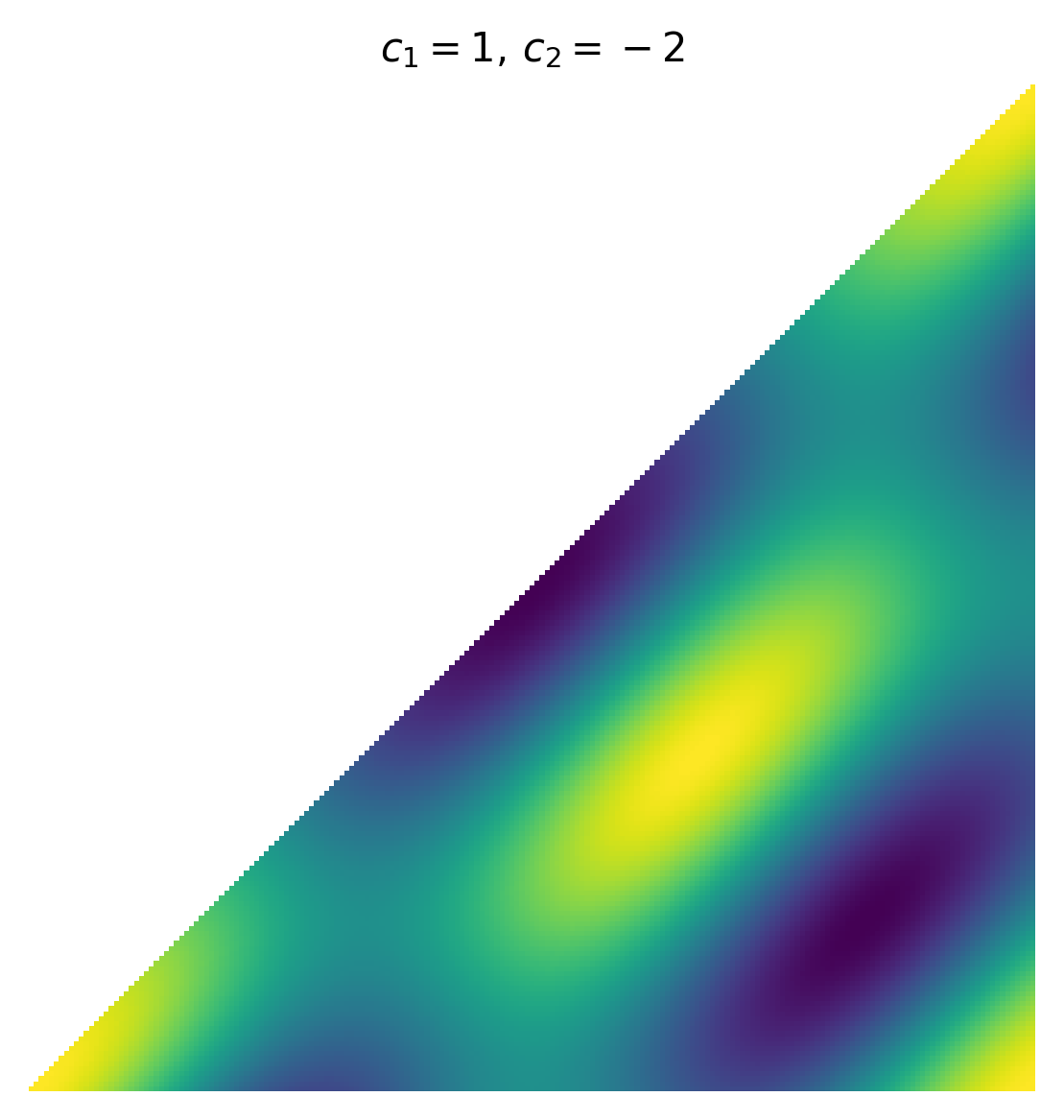}
		\label{fig:ef_8}
	\end{subfigure}
	\hfill
	\begin{subfigure}{0.32\textwidth}
		\centering
		\includegraphics[width=\linewidth]{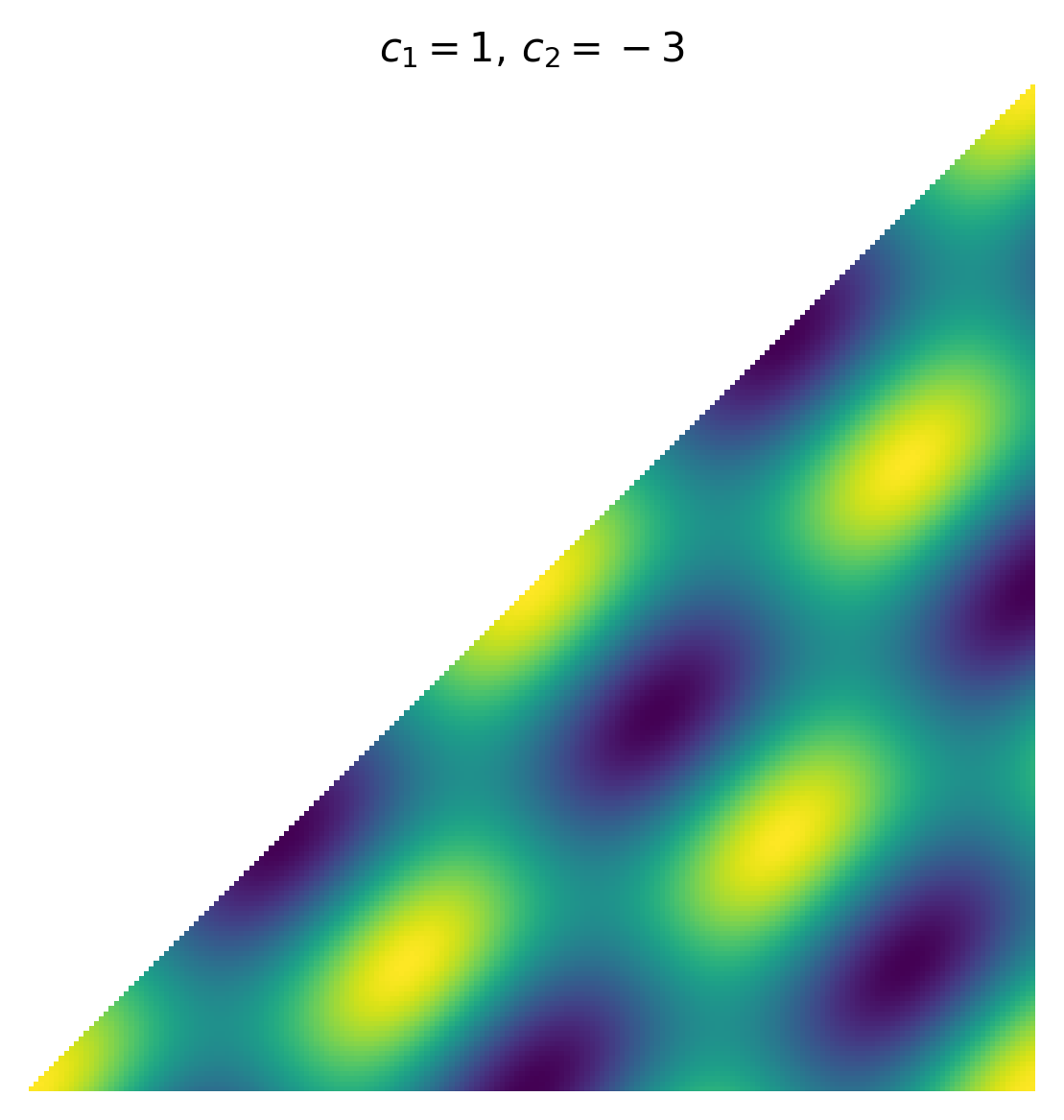}
		\label{fig:ef_9}
	\end{subfigure}
	}
	\caption{Examples of the real part of some eigenfunctions on $\mathcal{C}^2_{12}$.}
	\label{fig:ex_eigenfunctions}
\end{figure}

\subsection{Smoothing a Periodicity Function}
\label{section:smoothing_a_periodicity_function}

Two frequencies $f_{1} \leq f_{2}$ are in a rational frequency ratio if
\begin{equation}
	\label{eq:rational_relationship}
	\frac{f_{2}}{f_{1}} = \frac{a}{b},
\end{equation}
where $a, b \in \mathbb{N}$ and $\gcd(a,b)=1$.
In music theory and psychoacoustics, intervals characterised by small integers
$a$ and $b$ are regarded as more consonant because they minimise interference
effects between partials, the individual frequency components of a sound
\citep{Sethares2005}.

Stolzenburg \citep{Stolzenburg2015} quantifies the consonance between two
frequencies $f_1$ and $f_2$ by the denominator~$b$ of their reduced
ratio~\eqref{eq:rational_relationship}, a quantity he refers to as periodicity.
This periodicity equals the length of the joint period of the two sinusoidal
frequencies, corresponding to $b$ periods of~$f_1$, and is psychoacoustically
meaningful because the auditory system tends to judge sounds as more consonant
when their combined waveform repeats after only a short time span
\citep{Stolzenburg2015,Himpel2022}.

Human listeners, however, do not perceive frequency linearly.
Pitch, as defined in Equation~\eqref{eq:pitch}, is measured in semitones
and therefore represents an absolute logarithmic pitch scale.
In contrast, cents measure only differences between pitches:
$100$\,cents correspond to one semitone and $1200$\,cents to one octave.
Because interval size is perceived logarithmically, it is natural to express
Stolzenburg's periodicity measure in logarithmic units and define the logarithmic periodicity of the pitch pair $(p_1,p_2)$ associated with 
frequencies $f_1,f_2$ with rational ratio by
\[
	P(p_{1},p_{2}) = \log_2(b),
\]
where $b$ is as in~\eqref{eq:rational_relationship}.
Small values of $P$ correspond to simple (consonant) pitch
relationships, whereas larger values indicate more complex or dissonant ones.

Importantly, closely spaced pitches cannot be discriminated reliably by
human listeners. Hence, the perceived complexity of an interval remains constant
as long as the pitch difference lies within one discrimination threshold.
This threshold is defined by the just noticeable difference (JND), the
smallest pitch change detectable in a comparison task, which is
typically given in cents.
Since pitches $p$ are measured in semitones, a JND of $c$ cents corresponds to a
pitch difference of $c/100$ semitones.
JNDs are estimated psychophysically at the point where changes in a stimulus
dimension become reliably perceptible \citep{Bausenhart2018}, thereby
delineating the region within which frequency differences do not produce
perceptually distinguishable changes in interval complexity.

Consequently, a psychoacoustic measure of complexity based on $P$
should remain approximately constant for pitch variations that lie within one
JND. This motivates the following JND-based measure of interval complexity for
arbitrary pairs of pitches:
\[
P_{\text{JND}}(p_1,p_2) =
\min\!\left\{
P(p_1,p)
\;\middle|\;
|p - p_2| \le \frac{\text{JND}}{100},
\;
\frac{f(p)}{f(p_1)} \in \mathbb{Q}
\right\},
\]
where $f(p)=f_0\,2^{p/12}$ denotes the frequency associated with the pitch~$p$.

Since the interval complexity depends only on the pitch difference between 
$p_1$ and $p_2$, we express it as a function of the interval 
$d = 100\,(p_2 - p_1)$ measured in cents and define
\[
P_{\text{JND}}(d)
:= P_{\text{JND}}(p_1,p_2).
\]
The resulting function $P_{\text{JND}}(d)$ is shown in 
Figure~\ref{fig:periodicity_function}.
\begin{figure}[tbp]
	\centering
	\begin{subfigure}{0.48\textwidth}
		\centering
		\includegraphics[width=\linewidth]{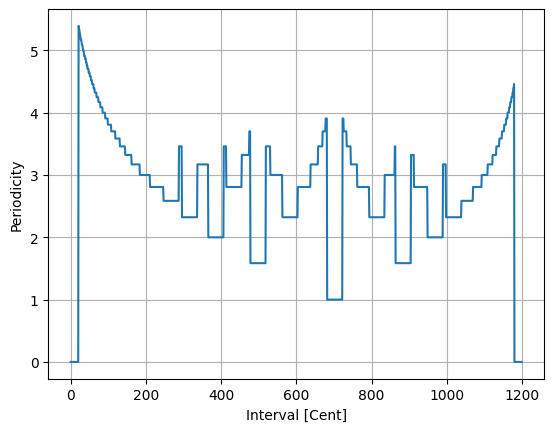}
		\caption{Periodicity function.}
		\label{fig:periodicity_function}
	\end{subfigure}
	\hfill
	\begin{subfigure}{0.48\textwidth}
		\centering
		\includegraphics[width=\linewidth]{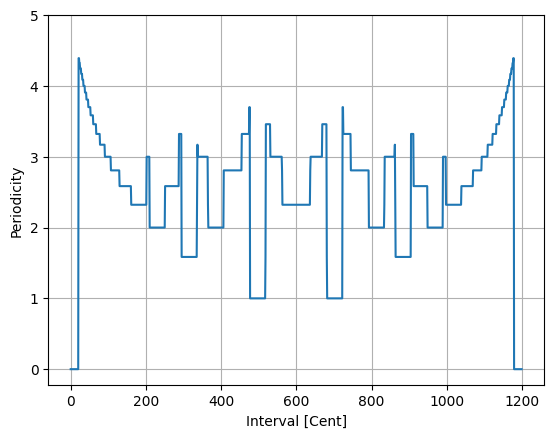}
		\caption{Symmetrized function.}
		\label{fig:symm_periodicity_function}
	\end{subfigure}
	\caption{Visualization of the periodicity function and its symmetrized version.}
	\label{fig:periodicity_combined}
\end{figure}
Behavioral and psychoacoustic studies demonstrate that just-noticeable differences for pitch are not fixed quantities but depend on multiple stimulus- and listener-related factors, including frequency region, sound level, timbre, and individual auditory sensitivity. Under controlled laboratory conditions employing pure or harmonic tones, frequency difference limens can reach very small values; however, substantial interindividual variability persists even in such idealized settings \citep{Micheyl2012}. 

Importantly, several experimental paradigms deliberately probe pitch deviations in the range of approximately $20-30$ cents, a region that lies near the transition between reliable and unreliable discrimination for many listeners and exhibits substantial interindividual variability \citep{Moeller2018}. In multisensory or timbrally complex contexts, such deviations may fall at or below reliable discriminability, as demonstrated by findings that even differences of $20-40$ cents can become difficult to detect when pitch judgments are biased by timbre or contextual factors \citep{Vurma2014_timbre_pitch_shift}.

In accordance with these findings, we adopt a representative pitch JND of $20$ cents as a
conservative yet perceptually meaningful tolerance level. This value lies well within the
empirically documented range of pitch discrimination thresholds, while remaining large
enough to account for variability across listeners, timbres, and listening conditions. The
periodicity function is therefore defined such that interval ratios falling within a
$\pm 20$-cent tolerance zone are assigned the value of the simplest rational approximation
within this perceptual neighborhood. In this way, the function reflects the fact that the
auditory system does not resolve the fine-grained discontinuities inherent to the raw
logarithmic periodicity measure.

Under the assumption that musically equivalent chords under inversion exhibit the same
degree of consonance, the periodicity function must furthermore be rendered invariant under
interval inversion. A resulting symmetrization is therefore obtained by
assigning to each interval the minimum of its periodicity value and that of its inversion,
thereby ensuring that both representatives of an inversion pair share the same consonance
value:
\[
P_{\text{JND}}^{+}(d) = \min\!\left\{
P_{\mathrm{JND}}(d),
\;
P_{\mathrm{JND}}(1200 - d)
\right\}.
\]
The resulting function $P^+_{\text{JND}}(d)$ is shown in 
Figure~\ref{fig:symm_periodicity_function}.

Although the periodicity of a dyad ultimately depends only on the interval
between its two pitches, this reduction to a one-dimensional quantity arises
from projecting the two-dimensional chord space $\mathcal{C}^2_{12}$ onto the
interval axis. The periodicity function we smooth is therefore not originally a
function on pitch differences, but a function on the dyad orbifold itself, obtained by
pulling back the interval-based periodicity to $\mathcal{C}^2_{12}$.

This distinction is essential: the Möbius topology of $\mathcal{C}^2_{12}$
encodes inversional symmetry and the non-orientability of dyad space, both of
which are lost under the one-dimensional interval parametrisation.
Spectral convolution, however, is intrinsically geometric and depends on the
Laplace eigenfunctions of the underlying space. Smoothing the periodicity function
in the full two-dimensional orbifold geometry therefore respects all symmetries
of dyad space, whereas a one-dimensional smoothing in interval coordinates would
ignore its topological and spectral structure.

The symmetrised periodicity function $P^+_{\text{JND}}$ extends canonically to the
dyad orbifold $\mathcal{C}^{2}_{12}$, yielding a well-defined,
inversion-invariant function
\begin{equation}
	\label{eq:function_to_smooth}
	P_{\text{JND}}^{+} : \mathcal{C}^2_{12} \to \mathbb{R}.
\end{equation}
The graph shown in Figure~\ref{fig:symm_periodicity_function} corresponds to a
one-dimensional section of this function, namely its restriction to the straight
line in $\mathcal{C}^2_{12}$ orthogonal to the first angle bisector and passing
through the centre of the domain, as visualised in Figure~\ref{fig:p_function_m}.
\begin{figure}[tbp]
	\centering
	\begin{subfigure}{0.32\textwidth}
		\centering
		\includegraphics[width=\linewidth]{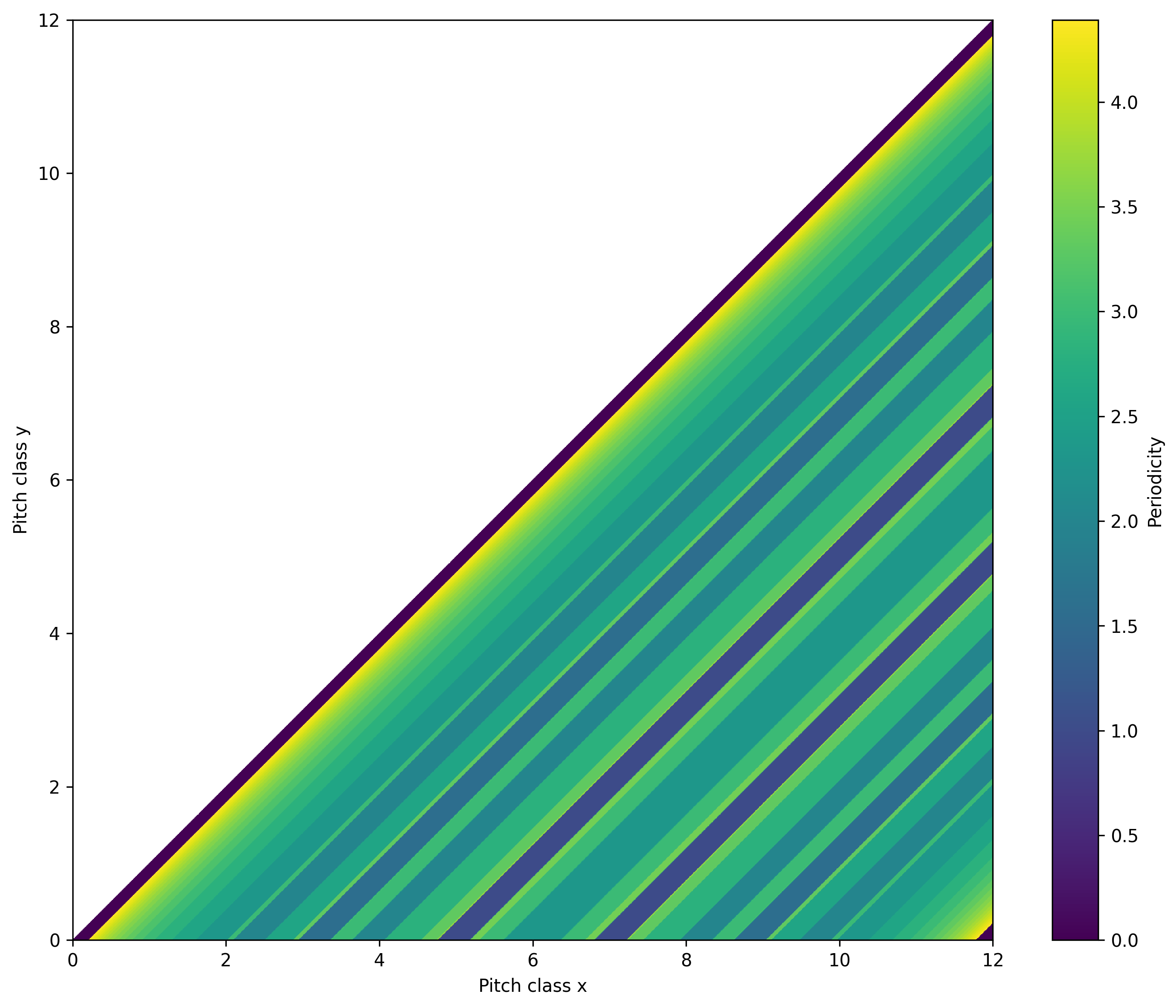}
		\caption{\small Periodicity function.}
		\label{fig:p_function_m}
	\end{subfigure}
	\hfill
	\begin{subfigure}{0.32\textwidth}
		\centering
		\includegraphics[width=\linewidth]{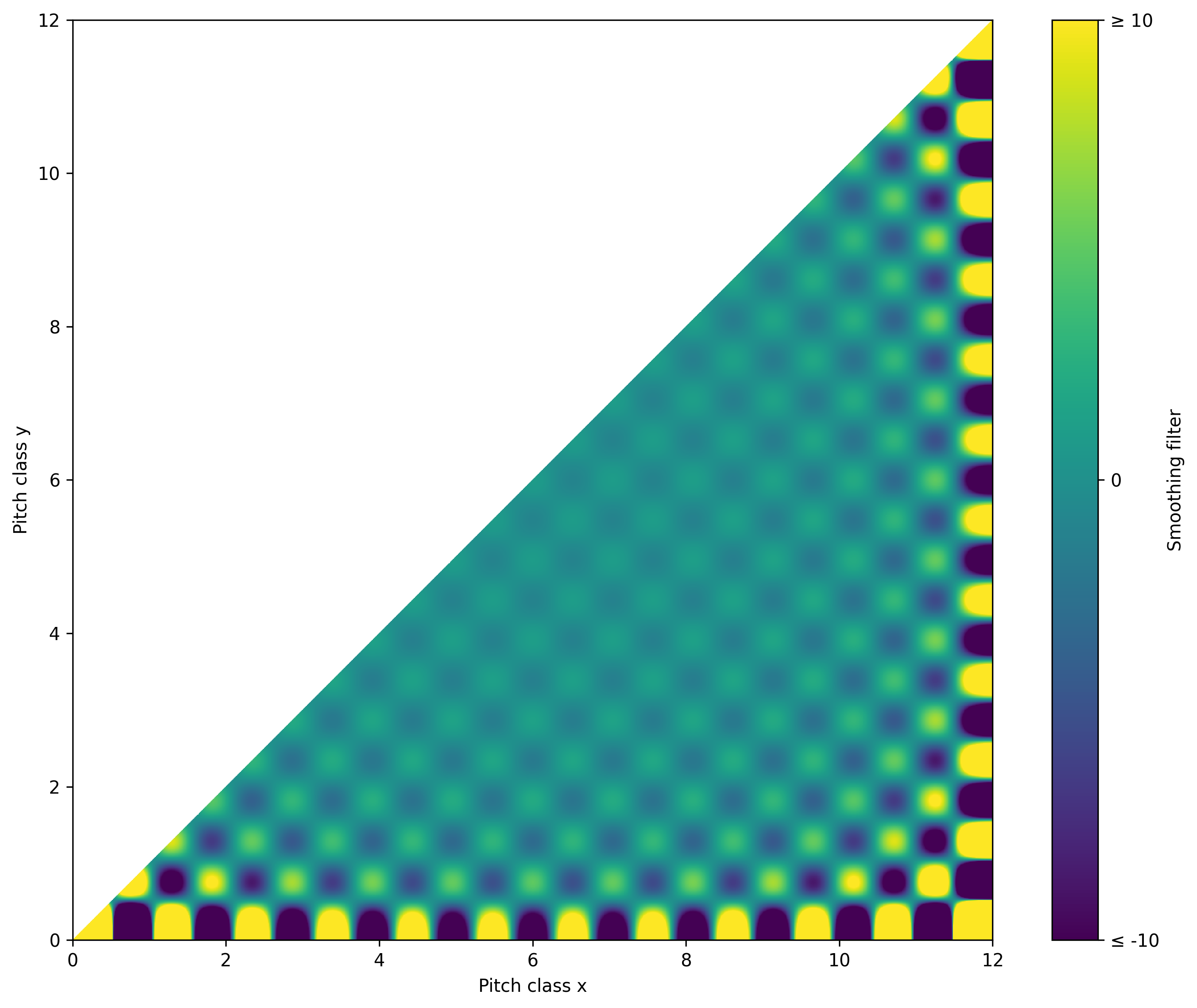}
		\caption{\small Smoothing filter.}
		\label{fig:kernel}
	\end{subfigure}
	\hfill
	\begin{subfigure}{0.32\textwidth}
		\centering
		\includegraphics[width=\linewidth]{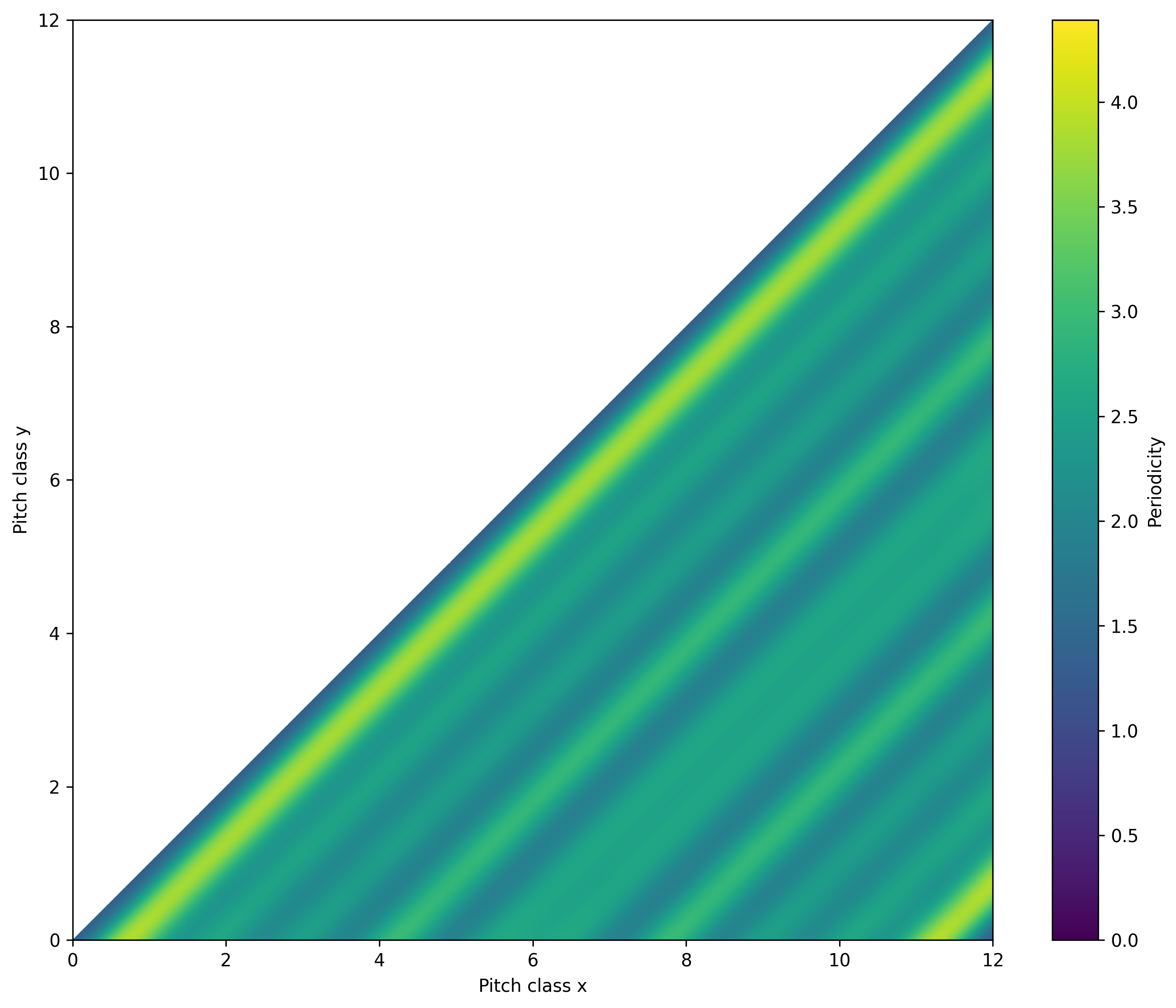}
		\caption{\small Smoothed function.}
		\label{fig:p_function_m_smoothed}
	\end{subfigure}
	\caption{(a) The symmetrized logarithmic periodicity function on $\mathcal{C}^2_{12}$, (b) the used smoothing filter, and (c) the result of the convolution.}
	\label{fig:periodicity_kernel_combined}
\end{figure}
We interpret the gradual loss of precision in auditory processing as a smoothing effect, and the continuous coding of roughness as evidence that consonance and dissonance are perceived gradually  \citep{McDermott2008}. Thereby the quality of roughness arises from various beatings between slightly different sine-wave components \citep{Himpel2022}. These assumptions motivate a smooth modeling of the periodicity function that can be implemented using the spectral convolution of the periodicity function with a smoothing filter.
The smoothing filter 
\begin{equation}
\label{eq:smoothing_filter}
g_n(u) = \sum_{k = 0}^{n} \psi_k(u)
\end{equation} is fully determined by its Fourier coefficients
\[
\hat{g}_n(k) = \left< g_n , \psi_k \right> := \begin{cases}
	1, & k \leq n, \\
	0, & k > n.
\end{cases}
\] De facto, convolution with this filter means that all coefficients up to $n$ are retained and the larger ones are discarded, which can be interpreted as a low-pass filter and the induced operator $f \mapsto f * g_n$ coincides with the orthogonal projection onto the subspace spanned by the first $n$ eigenfunctions
\[
V_n := \text{span}\{\psi_0,...,\psi_n \} \subseteq L^{2}(\mathcal{C}^2_{12}).
\] The parameter $n$ controls the spectral resolution of the smoothing: smaller
values lead to excessive smoothing that blurs genuine differences in consonance,
whereas larger values admit increasingly fine distinctions and reintroduce
oscillatory artefacts near the sharp transitions of the JND-based periodicity
function. In particular, overly large $n$ give rise to overshoots and negative
values, thereby violating a basic semantic constraint, as negative periodicity
has no meaningful interpretation. The chosen value $n=529$ provides a moderate
level of smoothing that preserves the global step structure while selectively
regularizing transitions between neighbouring regions, avoiding nonphysical
oscillations at steep boundaries.

The smoothed function is obtained by our convolution \eqref{eq:conv_on_orbi} of the periodicity function \eqref{eq:function_to_smooth} and the smoothing filter \eqref{eq:smoothing_filter}:
\[
(P_{\text{JND }}^{+,s})(u) = (P_{\text{JND }}^{+} * g_{529})(u).
\] The functions are shown in Figures \ref{fig:p_function_m}, \ref{fig:kernel} and \ref{fig:p_function_m_smoothed}. The comparison of the symmetrized and smoothed logarithmic periodicity function can be found in Figure \ref{fig:comparison}.
\begin{figure}[tbp]
    \centering
    \includegraphics[width=1.0\textwidth]{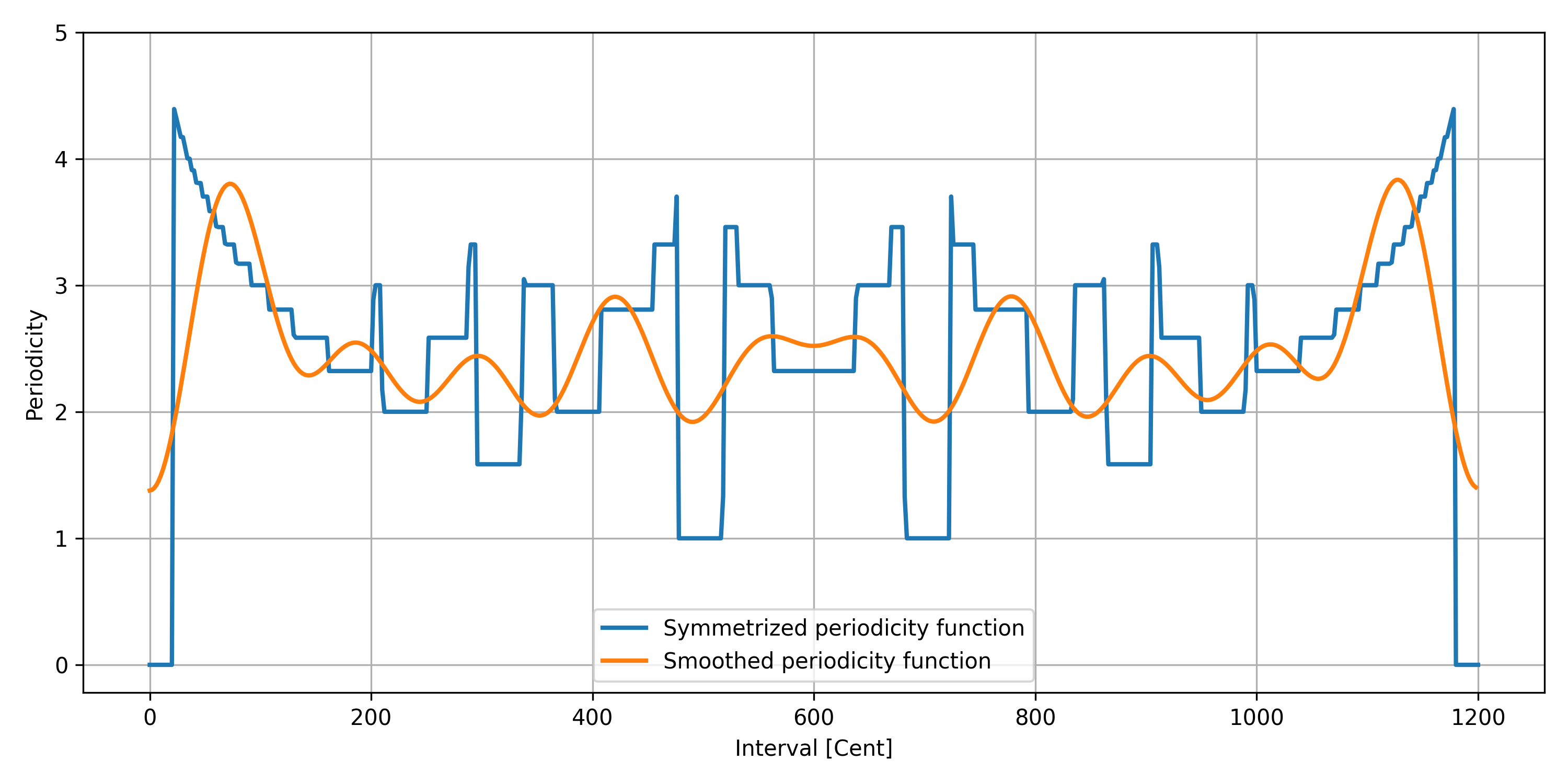}
    \caption{Comparison of the symmetrized and smoothed logarithmic periodicity function.}
    \label{fig:comparison}
\end{figure}
\section{Discussion and Outlook}
\label{section:discussion_and_outlook}
A discussion follows, as well as an outlook on further theoretical considerations and application-related content. 
\subsection{Discussion}
The discussion splits into a general discussion in the context of GDL and one within the musicological framework of the example.
\subsubsection{Conceptual Implications for Geometric Deep Learning}
The main contribution consists in extending spectral constructions from GDL to orbifold domains through a notion of convolution. By working directly with the orbifold geometry, the resulting spectral representation is canonical and coordinate free, and remains compatible with the nontrivial identifications induced by the quotient. This allows spectral convolution and smoothing to be formulated intrinsically on orbifolds, without ad hoc modifications or special treatment of exceptional regions, thereby showing that core mechanisms of GDL extend beyond the manifold setting.

Orbifold geometry encodes identification constraints directly at the level of the domain via quotient constructions, thereby introducing a geometric inductive bias. In contrast to manifold-based settings, where consistency with such identifications is typically approximated through data augmentation, the orbifold formulation restricts the admissible function space a priori to representations that are compatible with the quotient structure. This reduces the effective hypothesis space and ensures intrinsic consistency without additional enforcement mechanisms \citep{Esteves2018}.

\subsubsection{Perceptual Implications of Spectral Smoothing}
The Dirichlet energy
\[
E(f) := \int_{\mathcal{C}^{2}_{12}} \|\nabla_{\mathcal{C}^{2}_{12}} f\|^{2} \, dv_{\mathcal{C}^{2}_{12}}
\]
defines a energy functional for functions on the dyad orbifold. The Laplace eigenfunctions $\psi_k$ can equivalently be characterized as minimizers of this energy under normalization and orthogonality constraints, as in the manifold case; see \citep[p.~53]{bronstein2021geometric}. Spectral smoothing that truncates a spectral expansion to the subspace $V_n$ thus retains those components corresponding to the smallest eigenvalues, and hence to the lowest Dirichlet energy. The resulting representations are therefore optimal with respect to this energy functional within the chosen spectral subspace.

From this perspective, spectral smoothing on the dyad orbifold is not a heuristic post-processing step but the restriction of a function to the subspace spanned by the lowest-energy modes of the orbifold Laplacian. The resulting representations are intrinsic to the orbifold geometry and provide a canonical notion of smoothness and regularity that is induced solely by the underlying energy. Speculatively, such low-energy representations may also be advantageous from a perceptual perspective, as they emphasize stable structures while suppressing high-energy fluctuations that are unlikely to be reliably auditorily discriminated.

\subsection{Outlook}
It was shown that a spectral convolution operator familiar from manifold-based GDL transfers directly to orbifold domains when defined intrinsically via the Laplacian. While in the present work this construction is primarily used as a smoothing mechanism, a direction for future work is to promote it to a trainable building block within end-to-end deep learning architectures on orbifolds. In analogy to spectral CNNs on manifolds, this would involve learning parametric filter functions defined on the Laplace spectrum, stacking multiple such convolutional layers with pointwise nonlinearities and suitable pooling or readout operations, following the general architectural paradigm of GDL \citep{bronstein2017geometric,bronstein2021geometric},  \citep{Cohen2016,Cohen2019}. 
From an optimization perspective, this setting fits into the the framework of learning in orbifolds where loss functions are lifted to the representation space and optimized using stochastic generalized-gradient learning that respect the group-induced identifications via alignment of vector representations\citep{Jain2012}. Taken together, these ideas represent an initial step toward making orbifold domains accessible to CNNs, thereby extending GDL architectures beyond manifolds to spaces with intrinsic quotient geometry.

With regard to specific applications, the present framework has clear relevance for computational music theory, as demonstrated in this work. A further, conceptually distinct direction arises in string phenomenology \citep{He2021}, where physically viable regions of the vast string landscape are explored using machine learning techniques. In this context, several promising corners of the landscape, such as the $\mathbb{Z}_6$-II orbifold compactification of the $E_8 \times E_8$ heterotic string, are described by orbifold geometries \citep{Mutter2019}. Neural networks are already being employed to identify admissible string compactifications and to assist the classification of heterotic orbifold models \citep{Notario2024}. Thus, the search for consistent string compactifications also represents a structurally well-aligned application domain for GDL on orbifolds.

\acks{
This work was supported by a grant from the Vector Foundation (Vector-Stiftung) awarded to B.H. The authors declare no competing financial interests.
}

\end{document}